\def\eqref#1{equation~\ref{#1}}
\def\Eqref#1{Equation~\ref{#1}}
\def\1{\bm{1}}
\def\va{{\bm{a}}}
\def\vu{{\bm{u}}}
\def\vw{{\bm{w}}}
\def\vx{{\bm{x}}}
\def\vz{{\bm{z}}}
\def\mB{{\bm{B}}}
\def\mL{{\bm{L}}}
\def\mQ{{\bm{Q}}}
\DeclareMathAlphabet{\mathsfit}{\encodingdefault}{\sfdefault}{m}{sl}
\SetMathAlphabet{\mathsfit}{bold}{\encodingdefault}{\sfdefault}{bx}{n}
\DeclareMathOperator*{\argmax}{arg\,max}
\DeclareMathOperator*{\argmin}{arg\,min}
\setlist[enumerate]{nosep}
\theoremstyle{plain}
\newtheorem{theorem}{Theorem}
\newtheorem*{theorem*}{Theorem}
\newenvironment{hproof}{\proof}{\endproof}
\icmltitlerunning{Continuous Fitted Value Iteration}
\begin{document}
\twocolumn[
\icmltitle{Value Iteration in Continuous Actions, States and Time}



\icmlsetsymbol{equal}{*}

\begin{icmlauthorlist}
\icmlauthor{Michael Lutter}{nvidia,tuda}
\icmlauthor{Shie Mannor}{nvidia,technion}
\icmlauthor{Jan Peters}{tuda}
\icmlauthor{Dieter Fox}{nvidia,washington}
\icmlauthor{Animesh Garg}{nvidia,toronto}
\end{icmlauthorlist}

\icmlaffiliation{technion}{Technion, Israel Institute of Technology}
\icmlaffiliation{nvidia}{NVIDIA}
\icmlaffiliation{toronto}{University of Toronto \& Vector Institute}
\icmlaffiliation{washington}{University of Washington}
\icmlaffiliation{tuda}{Technical University of Darmstadt}

\icmlcorrespondingauthor{Michael Lutter}{michael@robot-learning.de}

\icmlkeywords{Machine Learning, ICML}

\vskip 0.3in
]


\printAffiliationsAndNotice{}  


\begin{abstract}
Classical value iteration approaches are not applicable to environments with continuous states and actions. For such environments, the states and actions are usually discretized, which leads to an exponential increase in computational complexity. 
%
In this paper, we propose continuous fitted value iteration (cFVI). This algorithm enables dynamic programming for continuous states and actions with a known dynamics model. Leveraging the continuous-time formulation, the optimal policy can be derived for non-linear control-affine dynamics. This closed-form solution enables the efficient extension of value iteration to continuous environments. 
%
We show in non-linear control experiments that the dynamic programming solution obtains the same quantitative performance as deep reinforcement learning methods in simulation but excels when transferred to the physical system. The policy obtained by cFVI is more robust to changes in the dynamics despite using only a deterministic model and without explicitly incorporating robustness in the optimization. Videos of the physical system are available at \url{https://sites.google.com/view/value-iteration}.
\end{abstract}
\vspace{-2.5em}
\section{Introduction} \vspace{-0.5em}
Reinforcement learning (RL) maximizes the scalar rewards~$r$ by trying different actions~$\vu$ selected by the policy~$\pi$. For a deterministic policy and deterministic dynamics, the discrete time optimization is described by
\begin{align}
    \pi^{*}(\vx) = \argmax_{\pi} \sum_{i=0}^{\infty} \gamma^{i} r(\vx_i, \vu_i), \label{eq:discrete_rl}
\end{align}
with the dynamics $\vx_{i+1} = f(\vx_i, \vu_i)$, the state $\vx$ and discounting factor $\gamma \in \left[0, 1\right)$ \cite{bellman1957dynamic, puterman1994markov}. The discrete time framework has been very successful to learn policies for various applications including robotics \cite{silver2017mastering, openai2019learning,da2020learning}. However, the fixed pre-determined time discretization is a limitation when dealing with physical phenomena evolving in continuous time. For such environments, the time-step can be chosen to fit the environment. For example, in robotics the control frequency is only limited by the sampling frequencies of sensors and actuators, which is commonly much higher than the control frequency of deep RL agents. To avoid the fixed time steps, we use the continuous-time RL objective to frame an optimization that is agnostic to the time-discretization. Furthermore, this formulation simplifies the optimization as many robot dynamics are control affine at the continuous-time limit. 

To solve the continuous-time RL problem, we use classical value iteration where the dynamics and reward function are known. For robotics, this setting is commonly a reasonable assumption. The reward is known as it is manually tuned to achieve the task without violating system constraints. The equations of motion and system parameters of robots are approximately known. The model is not perfect as the actuators are not ideal and the physical parameters vary slightly. However, the overall system behavior is approximately correct. The same assumption is also used within the vast sim2real literature to enable the sample efficient transfer to the physical world \cite{ramos2019bayessim, chebotar2019closing,xie2021dynamics}.   

\begin{figure}[t]
    \centering
    \includegraphics[width=\columnwidth]{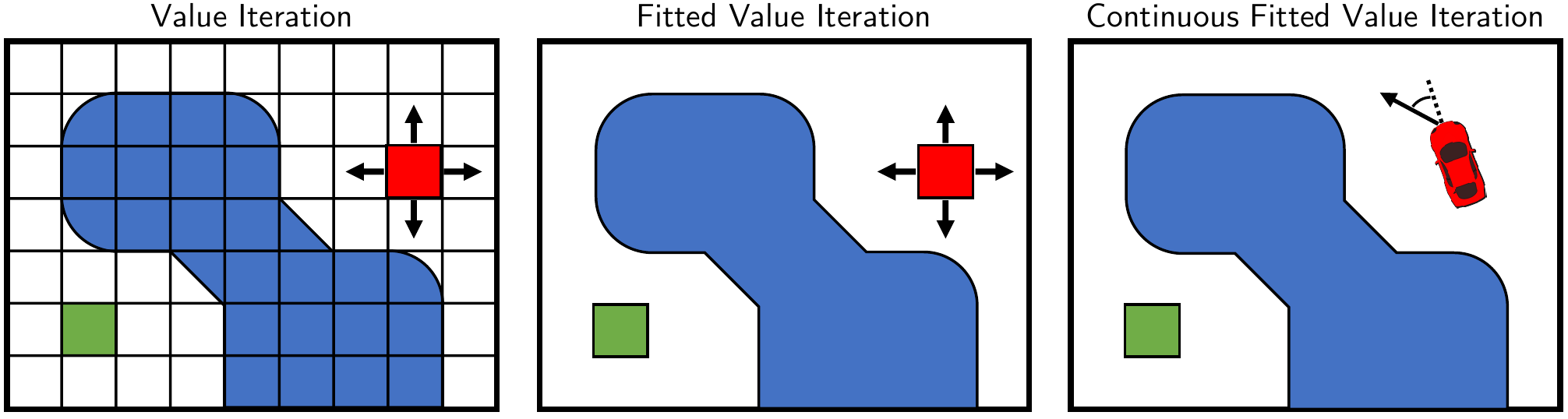}
    \vspace{-2.2em}
    \caption{Value iteration (VI) can compute the optimal value function $V^{*}$ and policy $\pi^{*}$ for discrete state and action environments. Fitted value iteration extends VI to continuous states and discrete actions. Our proposed continuous fitted value iteration enables value iteration for continuous states and continuous actions, e.g., a car with steering rather than a car moving left, right, up and down.}
    \label{fig:toy_figures}
    \vspace{-1.2em}
\end{figure} 

\begin{figure*}[t]
    \centering
    \includegraphics[width=\textwidth]{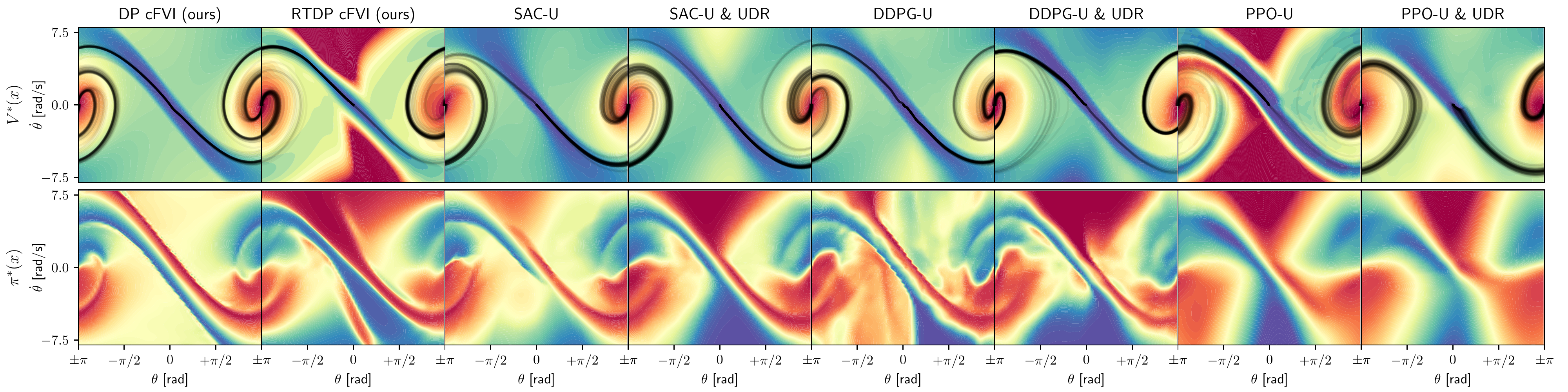}
    \vspace{-2.5em}
    \caption{The optimal value function $V^{*}$ and policy $\pi^{*}$ and roll outs of the torque-limited pendulum computed by DP cFVI, RTDP cFVI and the deep RL baselines with uniform initial state distribution. cFVI learns a symmetric and smooth optimal policy that can swing-up the pendulum from both sides. Especially the DP variant has a very sharp ridge leading to the upward pointing pendulum. The baselines do not achieve a symmetric swing-up and usually prefer a one-sided swing-up. 
    }
    \label{fig:value_fun_pendulum}
    \vspace{-1.2em}
\end{figure*} 

In this paper we show for the continuous-time RL that 
\begin{enumerate} [wide=0pt]
\item classical value iteration \cite{bellman1957dynamic} can be extended to environments with continuous actions and states if the dynamics are control affine and the reward separable. Learning the value function $V$ is sufficient as the policy can be deduced from $V$ in the continuous-time limit. Therefore, one does not need the policy optimization used by the pre-dominant actor-critic approaches \cite{schulman2015high, lillicrap2015continuous} or the discretization required by the classical algorithms \cite{bellman1957dynamic, sutton1998introduction}.
\item the proposed approach can be successfully applied to learn the optimal policy for real-world under-actuated systems using only the approximate dynamics model. 
\item We provide an in-depth quantitative and qualitative evaluation with comparisons to actor-critic algorithms. Using value iteration on the compact state domain obtains a more robust policy when transferred to the physical systems.
\end{enumerate}
\textbf{Summary of contributions.} We extend value iteration to continuous actions as well as the extensive quantitative and qualitative evaluation on two physical systems. The latter is in contrast to prior work \cite{schulman2017proximal, haarnoja2018soft, lillicrap2015continuous,harrison2017adapt,mandlekar2017arpl}, which mainly focuses on the quantitative comparison in simulation. Instead, we focus on the qualitative evaluation on the physical Furuta pendulum and Cartpole to understand the differences in performance.   

In the following, we summarize continuous-time RL (Section~\ref{sec:problem}) and introduce continuous fitted value iteration (Section~\ref{sec:cont_rl}). Section \ref{sec:value} describes suitable network representations for a Lyapunov value function. Finally, we analyze the performance on the physical systems in Section \ref{sec:experiments}.      

\vspace{-0.5em}
\section{Problem Statement} 
\label{sec:problem} \vspace{-0.5em}
We consider the deterministic, continuous-time RL problem. The infinite horizon optimization is described by
\begin{gather}
\pi^*(\vx_0) = \argmax_{\pi} \int_{0}^{\infty} \exp(-\rho t) \:\: r_{c}(\vx_t, \vu_t) \: dt  \label{eq:cont_policy} \\ 
V^{*}(\vx_0) = \max_{\vu} \int_{0}^{\infty} \exp(-\rho t) \: r_c(\vx_{t}, \vu_{t}) \:  dt  \label{eq:cont_val} \\
\text{with} \hspace{10pt} \vx(t) = \vx_0 + \int_{0}^{t} f_c(\vx_{\tau}, \vu_{\tau}) \: d\tau 
\end{gather}
with the discounting constant $\rho \in (0, \infty]$, the reward $r_c$ and the dynamics $f_c$ \cite{kirk2004optimal}. Notably, the discrete reward and discounting can be described using the continuous-time counterparts, i.e., $r(\vx, \vu) = \Delta t \: r_c(\vx, \vu)$ and $\gamma = \exp(-\rho \: \Delta t)$ with the sampling interval $\Delta t$. The continuous-time discounting $\rho$ is, in contrast to the discrete discounting factor $\gamma$, agnostic to sampling frequencies. In the continuous-time case, the Q-function does not exist \cite{doya2000reinforcement}.

The deterministic continuous-time dynamics model $f_c$ is assumed to be non-linear w.r.t. the system state $\vx$ but affine w.r.t. the action~$\vu$. Such dynamics model is described by
\begin{align}
\dot{\vx} = \va(\vx; \theta) + \mB(\vx; \theta) \vu, \label{eq:affine_dyn} 
\end{align}
with the non-linear drift $\va$, the non-linear control matrix $\mB$ and the system parameters $\theta$. Robot dynamics models are naturally expressed in the continuous-time formulation and many are control affine. Furthermore, this special case has received ample attention in the existing literature due to its wide applicability \cite{doya2000reinforcement, kappen2005linear, todorov2007linearly}. 
The reward is separable into a non-linear state reward $q_c$ and the action cost $g_c$ described by
\begin{align}
    r_c(\vx, \vu) = q_c(\vx) - g_c(\vu). \label{eq:seperable_rwd}
\end{align}
The action penalty $g_{c}$ is non-linear, positive definite and strictly convex. This separability is common for robot control problems as rewards are composed of a state component quantifying the distance to a desired state 
and an action penalty. The action cost penalizes non-zero actions to avoid bang-bang control from being optimal and is convex to have an unique optimal action.

\vspace{-0.5em}
\section{Continuous Fitted Value Iteration} 
\label{sec:cont_rl}\vspace{-0.5em}
First, we summarize the existing value iteration approaches and present the theory enabling us to extend value iteration to continuous action spaces. Afterwards, we introduce our proposed algorithm to learn the value function.

\vspace{-0.5em}
\subsection{Value Iteration Preliminaries} \vspace{-0.5em}
Value iteration (VI) is an approach to compute the optimal value function for a discrete time,  state and action MDP with known dynamics and reward \cite{bellman1957dynamic}. This approach iteratively updates the value function of each state using the Bellman optimality principle described by
\begin{align*}
 V^{k+1}(\vx_t) = \max_{\vu_{0 \dots \ell}} \: \sum_{i=0}^{\ell-1} \gamma^{i} r(\vx_{t + i}, \vu_{i}) + \gamma^\ell V^{k}(\vx_{t + \ell}),
\end{align*}
with the number of look-ahead steps $\ell$. VI is proven to converge to the optimal value function \cite{puterman1994markov} as the update is a contraction described by
\begin{align} \label{eq:contraction}
    \| V^{k+1}_{i} - V^{k+1}_{j} \| \leq \gamma^{\ell} \| V_i - V_j \|. 
\end{align}
For discrete actions, the greedy action is determined by evaluating all possible actions. However, VI is impractical for larger MDPs as the computational complexity grows exponentially with the number of states and actions. This is especially problematic for continuous spaces as discretization leads to an exponential increase of state-action tuples.

VI can be applied to continuous state spaces and \textbf{discrete} actions by using a function approximator instead of tabular value function. 
Fitted value iteration (FVI) computes the value function target using the VI update and minimizes the $\ell_p$-norm between the target and the approximation $V^{k}(\vx; \:\psi)$. This approach is described by
\begin{gather}
V_{\text{tar}}(\vx_t) = \max_{\vu} \: r(\vx_{t}, \vu) + \gamma V^{k}(f(\vx_{t}, \vu); \: \psi_k) \label{eq:fvi_update} \\ 
\psi_{k+1} = \argmin_{\psi} \sum_{\vx \in \mathcal{D}} \| V_{\text{tar}}(\vx) - V^{k}(\vx; \:\psi)  \|_{p}^{p} \label{eq:fitting}
\end{gather}
with the parameters $\psi_k$ at iteration $k$ and the fixed dataset~$\mathcal{D}$.
The convergence proof of VI does not generalize to FVI as the fitting of the value function is not necessarily a contraction \cite{boyan1994generalization, baird1995residual, tsitsiklis1996feature, munos2008finite}. 
Despite these theoretical limitations, many authors proposed model-free variants using the Q-function for discrete action MDPs, e.g., fitted Q-iteration (FQI) \cite{ernst2005tree}, Regularized FQI \cite{massoud2009regularized}, Neural FQI \cite{riedmiller2005neural} and DQN \cite{mnih2015human}. 
The resulting algorithms were empirically successful and solved Backgammon \cite{tesauro1992practical} and the Atari games \cite{mnih2015human}. 

For continuous actions, current RL methods use policy iteration (PI) rather than VI \cite{schulman2015high, lillicrap2015continuous,}. PI evaluates the value function of the current policy and hence, uses the action of the policy to compute the value function target. Therefore, PI circumvents the maximization required of VI (\Eqref{eq:fvi_update}). To update the policy, PI uses an additional optimization to improve the policy via policy gradients. Therefore, PI requires an additional optimization compared to VI approaches.

\begin{algorithm}[t]
\caption{Continuous Fitted Value Iteration (cFVI)}
\label{alg:cFVI}
\begin{algorithmic}
\STATE {\bfseries Input:} Dynamics Model $f_{c}(\vx, \vu)$ \& Dataset $\mathcal{D}$
\STATE {\bfseries Result:} Value Function $V^{*}(\vx;\: \psi^{*})$
\STATE
\FOR{k = 0 \dots N}
\STATE // Compute Value Target for $\vx \in \mathcal{D}$:\;
\STATE $V_{\text{tar}}(\vx_i) = \int_{0}^{T} \beta \: \exp(-\beta t) \: R_t \: dt + \exp(-\beta T) R_T$
\STATE $R_t = \int_{0}^{t} \exp(-\rho \tau) \: r_c(\vx_{\tau}, \vu_{\tau}) d\tau + \exp(-\rho t) V^{k}(\vx_t)$
\STATE $\vx_{\tau} = \vx_i + \int_{0}^{\tau} f_c(\vx_{t}, \vu_{t}) dt$
\STATE $\vu_{\tau} = \nabla \tilde{g}\left(\mB(\vx_{\tau}) \nabla_{x}V^{k}(\vx_{\tau}))\right)$  
\STATE
\STATE // Fit Value Function:
\STATE $\psi_{k+1} = \argmin_{\psi} \sum_{\vx \in \mathcal{D}} \| V_{\text{tar}}(\vx) - V(\vx; \psi) \|^{p}$ \;
\STATE
\IF{RTDP cFVI}
\STATE // Add samples from $\pi^{k+1}$ to FIFO buffer $\mathcal{D}$
\STATE $\mathcal{D}^{k+1} = h(\mathcal{D}^{k}, \{\vx^{k+1}_0 \: \dots \: \vx^{k+1}_N \})$
\ENDIF
\ENDFOR
\end{algorithmic}
\end{algorithm}

\vspace{-0.5em}
\subsection{Deriving the Analytic Optimal Policy}\vspace{-0.5em}
One cannot directly apply FVI to continuous actions due to the maximization in \Eqref{eq:fvi_update}. To compute the value function target, one would need to solve an optimization in each iteration. To extend value iteration to continuous actions, we show that one can solve this maximization analytically for the considered continuous-time problem. This solution enables the efficient computation of the value function target and extends FVI to continuous actions. 

\begin{theorem} \label{theorem:opt_policy}
If the dynamics are control affine (\Eqref{eq:affine_dyn}), the reward is separable w.r.t. to state and action (\Eqref{eq:seperable_rwd}) and the action cost $g_c$ is positive definite and strictly convex, the continuous-time optimal policy $\pi^{k}$ is described by
\begin{gather}
    \pi^{k}(\vx)  = \nabla \tilde{g}_c \left( \mB(\vx)^{T} \nabla_{x} V^{k} \right) \label{eq:theorem}
\end{gather}
where $\tilde{g}$ is the convex conjugate of $g$ and $\nabla_{x} V^k$ is the Jacobian of current value function $V^{k}$ w.r.t. the system state.
\end{theorem}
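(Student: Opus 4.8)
The plan is to characterize the iterate value function $V^{k}$ through its Hamilton--Jacobi--Bellman (HJB) optimality condition and then to carry out the inner maximization over $\vu$ in closed form using convex duality. First I would specialize the continuous-time Bellman principle underlying \eqref{eq:cont_val} to the infinitesimal look-ahead limit, which turns the integral objective into the HJB equation
\begin{align*}
\rho \, V^{k}(\vx) = \max_{\vu} \, \left[ r_c(\vx, \vu) + \nabla_{x} V^{k}(\vx)^{T} \dot{\vx} \right].
\end{align*}
Substituting the control-affine dynamics \eqref{eq:affine_dyn} and the separable reward \eqref{eq:seperable_rwd}, and collecting the terms that depend on $\vu$, yields
\begin{align*}
\rho \, V^{k}(\vx) = q_c(\vx) + \nabla_{x} V^{k}(\vx)^{T} \va(\vx) + \max_{\vu} \, \left[ \nabla_{x} V^{k}(\vx)^{T} \mB(\vx) \vu - g_c(\vu) \right].
\end{align*}
The greedy (optimal) action for $V^{k}$ is by definition the argmax of this bracket, so the whole theorem reduces to solving the pointwise problem $\max_{\vu} [ \vp^{T} \vu - g_c(\vu) ]$ with $\vp := \mB(\vx)^{T} \nabla_{x} V^{k}$.

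The key observation is that this inner problem is exactly the Legendre--Fenchel pairing whose optimal value is the convex conjugate $\tilde{g}_c(\vp) = \sup_{\vu} [ \vp^{T}\vu - g_c(\vu)]$. I would then show the maximizer is unique and equals $\nabla \tilde{g}_c(\vp)$. Since $g_c$ is strictly convex and differentiable, the objective $\vp^{T}\vu - g_c(\vu)$ is strictly concave, so the stationarity condition $\nabla g_c(\vu) = \vp$ is both necessary and sufficient for the maximizer. Strict convexity makes $\nabla g_c$ injective, positive definiteness (coercivity) makes the supremum attained and $\nabla g_c$ onto the relevant range, and standard Legendre duality then gives $(\nabla g_c)^{-1} = \nabla \tilde{g}_c$. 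Hence the unique maximizer is $\vu^{*} = \nabla \tilde{g}_c(\vp) = \nabla \tilde{g}_c( \mB(\vx)^{T} \nabla_{x} V^{k})$, which is precisely \eqref{eq:theorem}.

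The main obstacle is not the first-order computation, which is routine, but justifying that $\nabla \tilde{g}_c$ genuinely inverts $\nabla g_c$ over the entire set of arguments $\vp$ that arise, and that the supremum defining $\tilde{g}_c$ is always finite and attained. This is exactly where the stated hypotheses must be used carefully: I would argue that positive definiteness of $g_c$ forces superlinear growth (coercivity), so $\vp^{T}\vu - g_c(\vu) \to -\infty$ as $\|\vu\| \to \infty$ and the maximizer exists, while strict convexity guarantees it is unique and that $g_c$ is of Legendre type, so the conjugate $\tilde{g}_c$ is itself differentiable with $\nabla \tilde{g}_c = (\nabla g_c)^{-1}$. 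A secondary technical point I would flag, rather than belabor, is the smoothness/verification step needed to pass from the integral value function to the HJB equation with equality, i.e., assuming $V^{k}$ is differentiable so that $\nabla_{x} V^{k}$ is well defined; this regularity is what licenses the infinitesimal Bellman expansion and the interchange that produces the pointwise maximization above.
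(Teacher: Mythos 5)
Your proposal is correct and follows essentially the same route as the paper: the paper Taylor-expands the discrete value-iteration target and takes $\Delta t \to 0$, which is exactly the infinitesimal Bellman/HJB step you write down directly, and both reduce the theorem to the pointwise concave problem $\max_{\vu}\left[\vp^{T}\vu - g_c(\vu)\right]$ with $\vp = \mB(\vx)^{T}\nabla_{x}V^{k}$, solved via the stationarity condition $\nabla g_c(\vu) = \vp$ and the identity $(\nabla g_c)^{-1} = \nabla \tilde{g}_c$. One cosmetic caveat: $V^{k}$ is an arbitrary iterate, so the HJB holds with equality only at convergence, but since you define $\pi^{k}$ as the argmax of the bracket rather than relying on that equality, nothing in your argument depends on it (and your explicit attention to attainment of the supremum and Legendre-type duality is, if anything, more careful than the paper's own sketch).
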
 

\begin{hproof}
The detailed proof is provided in the appendix. This derivation follows the work of Lutter et. al. \yrcite{lutter2019hjb}, which generalized the special cases initially described by Lyshevski \yrcite{lyshevski1998optimal} and Doya \yrcite{doya2000reinforcement} to a wider class of reward functions. The value iteration target (\Eqref{eq:fvi_update}) can be expressed by using the Taylor series for the value function $V(\vx_{t+1})$ and substituting \Eqref{eq:affine_dyn} and \Eqref{eq:seperable_rwd}. This reformulated value target is described by
\begin{align*}
V_{\text{tar}}  
                &= \max_{\vu} \:\: \left[ \gamma \nabla_{x}V^{T} \left(\va + \mB \vu \right) + \gamma \: \mathcal{O}(.) + q_c - g_c \right] \Delta t + \gamma V
\end{align*}
with the higher order terms $\mathcal{O}(\Delta t, \vx, \vu)$. 
In the continuous-time limit the higher order terms disappear and $\gamma = 1$. Therefore, the optimal action is described by 
\begin{align}
    \vu^{*}_{t} = \argmax_{\vu} \: \nabla_{x} V^{T} \mB(\vx_t) \: \vu - g_{c}(\vu). \label{eq:final_opt}
\end{align}
\Eqref{eq:final_opt} can be solved analytically as $g_c$ is strictly convex and hence $\nabla g_{c}(\vu) = \vw$ is invertible, i.e., $\vu = \left[ \nabla g_{c}\right]^{-1}(\vw) \coloneqq \nabla \tilde{g}_{c}(\vw)$ with the convex conjugate $\tilde{g}$. The solution of \Eqref{eq:final_opt} is described by
\begin{align*}
\mB^T \nabla_{x}V^{*} - \nabla g_c(\vu) \coloneqq 0 \hspace{5pt} \Rightarrow \hspace{5pt} \vu^{*} = \nabla \tilde{g}_c \left( \mB^{T} \nabla_{x}V^{k} \right).
\end{align*}
\end{hproof} 
\vspace{-1.5em}
Unrolling the closed form policy over time performs hill-climbing on the value function, where the step-size corresponds to the time-discretization. The inner part $\mB(\vx)^{T} \nabla_{x}V^{k}$ determines the direction of steepest ascent and the action cost rescales this direction. For example, a zero action cost for all admissible actions makes the optimal policy take the largest possible actions, i.e., the policy is bang-bang controller. 
A quadratic action cost linearly re-scales the gradient. 
A barrier shaped cost clips the gradients. A complete guide on designing the action cost to shape the optimal policy can be found in Lutter et. al. \yrcite{lutter2019hjb}. The step-size, which corresponds to the control frequency of the discrete time controller, determines the convergence to the value function maximum. If the step size is too large the system becomes unstable and does not converge to the maximum. For most real world robotic systems with natural frequencies below $5$Hz and control frequencies above $100$Hz, the resulting step-size is sufficiently small to achieve convergence to the value function maximum. Therefore, $\pi^{*}$ can be used for discrete time controllers. Furthermore, the continuous-time policy can be used for intermittent control (event-based control) where interacting with the system occurs at irregular time-steps and each interaction is associated with a cost \cite{astrom2008event}.  

\vspace{-0.5em}
\subsection{Learning the Optimal Value Function} 
\label{sec:cFVI} \vspace{-0.5em}
Using the closed-form policy that analytically solves the maximization, one can derive a computationally efficient value iteration update for continuous states and actions. Substituting $\vu^{*}$ (\Eqref{eq:theorem}) into the value iteration target computation (\Eqref{eq:fvi_update}) yields 
\begin{gather*}
    V_{\text{tar}}(\vx_t) = r\left(\vx_{t}, \nabla \tilde{g}\left(\mB(\vx_t)^{T} \nabla_{x}V^{k}\right)\right) + \gamma V^{k}(\vx_{t+1}; \: \psi_k) \\
    \text{with} \hspace{10pt} \vx_{t+1} = f\left(\vx_{t}, \nabla \tilde{g}(\mB(\vx_t)^{T} \nabla_{x}V^{k}) \right).
\end{gather*}
Combined with fitting the value function to the target value (\Eqref{eq:fitting}), 
these two steps constitute the value function update of cFVI. Repeatedly computing the value target and fitting the value function leads to the optimal value function and policy.

\begin{figure}[t]
    \centering
    \includegraphics[width=\columnwidth]{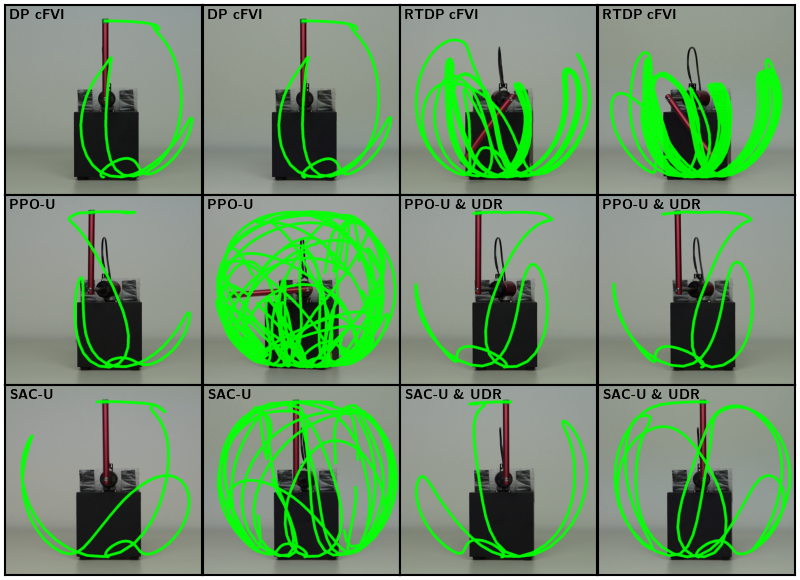}
    \vspace{-2.em}
    \caption{Tracked swing-up of the Furuta pendulum for best and worst roll out. DP cFVI can consistently swing-up the pendulum with minimal variance between roll outs. DP cFVI achieves much better performance compared to most baselines. Only PPO with domain randomization achieves comparable results. The remaining baselines are shown in the appendix.}
    \label{fig:furuta}\vspace{-1.2em}
\end{figure} 

For the continuous-time limit, the computation of the \emph{naive} value function target should be adapted as the convergence decreases with decreasing time steps. As $\Delta t$ decreases, the $\gamma$ increases, i.e., $\gamma = \lim_{\Delta t \rightarrow 0} \exp(-\rho \Delta t) = 1$. Therefore, the contraction coefficient of VI decreases exponentially with increasing sampling frequencies. This slower convergence is intuitive as higher sample frequencies effectively increase the number of steps to reach the goal. 

\noindent \textbf{$\mathbf{N}$-Step Value Function Target}
To increase the computational efficiency cFVI, the exponentially weighted n-step value function target
\begin{gather*}
V_{\text{tar}}(\vx) = \int_{0}^{T} \beta \: \exp(-\beta t) \: R_t \: dt + \exp(-\beta T) R_T,  \\ 
R_t = \int_{0}^{t} \exp(-\rho \tau) \: r_c(\vx_{\tau}, \vu_{\tau}) d\tau + \exp(-\rho t) V^{k}(\vx_t), \label{eq:R_t}
\end{gather*}
and the exponential decay constant $\beta$, can be used. The integrals can be solved using any ordinary differential equation solver with fixed or adaptive step-size. We use the explicit Euler integrator with fixed steps to solve the integral for all samples in parallel using batched operations on the GPU. The nested integrals can be computed efficiently by recursively splitting the integral and reusing the estimate of the previous step. In practice we treat $\beta$ as a hyperparameter and select $T$ such that the weight of the $R_T$ is $\exp\left(-\beta T\right) \coloneqq 10^{-4}$.

The convergence rate improves as this target computation corresponds to the multi-step value target. The n-step target increases the contraction rate as this rate is proportional to $\gamma^{\ell}$. This approach is the continuous-time counterpart of the discrete eligibility trace of TD($\lambda$) with $\lambda = \exp(- \beta \Delta t)$ \citep{sutton1998introduction}. With respect to deep RL this discounted n-step value target is similar to the generalized advantage estimation (GAE) of PPO \cite{schulman2015high, schulman2017proximal} and model-based value expansion (MVE) \cite{feinberg2018model, buckman2018sample}. GAE and MVE have shown that the $n$-step target increases the sample efficiency and lead to faster convergence to the optimal policy.

\textbf{Offline and Online cFVI}
The proposed approach is off-policy as the samples in the replay memory do not need to originate from the current policy $\pi_k$. Therefore, the dataset can either consist of a fixed dataset (i.e., batch or offline RL) or be updated within each iteration. In the offline dynamic programming case, the dataset contains samples from the compact state domain~$\mathcal{X}$. We refer to the offline variant as DP cFVI. In the online case, the replay memory is updated with samples generated by the current policy $\pi_k$. Every iteration the states of the previous $n$-rollouts are added to the data and replace the oldest samples. 
This online update of state distribution performs real-time dynamic programming (RTDP)~\cite{barto1995learning}. We refer to the online variant as RTDP cFVI. The pseudo code of DP cFVI and RTDP cFVI is summarized in Algorithm~\ref{alg:cFVI}.

\vspace{-0.5em}
\section{Value Function Hypothesis Space} 
\label{sec:value} \vspace{-0.5em}
The previous sections focused on learning the optimal value function independent of the value function representation. In this section, we focus on the value function representation. 

\textbf{Value Function Representation}
Most recent approaches use a fully connected network to approximate the value function. However, for the many tasks the hypothesis space of the value function can be narrowed. For control tasks, the state cost is often a negative distance measure between $\vx_t$ and the desired state $\vx_{\text{des}}$. Hence, $q_c$ is negative definite, i.e., $q(\vx) < 0 \:\: \forall \:\: \vx \neq \vx_{\text{des}}$ and $q(\vx_{\text{des}}) = 0$. 
These properties imply that $V^{*}$ is a negative Lyapunov function, as $V^{*}$ is negative definite, $V^{*}(\vx_{\text{des}}) = 0$ and $\nabla_{x}V^{*}(\vx_{\text{des}}) = \mathbf{0}$ \cite{khalil2002nonlinear}. With a deep network a similar representation can be achieved by
\begin{align*}
    V(\vx; \: \psi) &= -\left(\vx -  \vx_{\text{des}}\right)^T \mL(\vx;\:\psi) \mL(\vx;\:\psi)^T \left(\vx -  \vx_{\text{des}}\right) 
\end{align*}
with $\mL$ being a lower triangular matrix with positive diagonal. This positive diagonal ensures that $\mL \mL^T$ is positive definite. Simply applying a ReLu activation to the last layer of a deep network is not sufficient as this would also zero the actions for the positive values and $\nabla_{x}V^{*}(\vx_{\text{des}}) = \mathbf{0}$ cannot be guaranteed. The local quadratic representation guarantees that the gradient and hence, the action, is zero at the desired state. However, this representation can also not guarantee that the value function has only a single extrema at~$\vx_{\text{des}}$ as required by the Lyapunov theory. In practice, the local regularization of the quadratic structure to avoid high curvature approximations is sufficient as the global structure is defined by the value function target. $\mL$ is the mean of a deep network ensemble with $N$ independent parameters $\psi_i$. The ensemble mean smoothes the initial value function and is differentiable. Similar representations have been used by prior works in the safe reinforcement learning community \cite{berkenkamp2017safe, richards2018lyapunov, kolter2019learning, chang2019neural,bharadhwaj2021csc}. 

\textbf{Gradient Projection of State Transformations}
State transformations should be explicitly incorporated in the value function and should not be contained in the environment, as for example in the openAI Gym \cite{brockman2016gym}. If the transformation is implicit, $\nabla_{x}V$ might not be sensible. For example, the standard feature transform for a continuous revolute joint maps the joint state $\vx = [ \theta,\: \dot{\theta}]$ to $\vz = [\sin(\theta), \: \cos(\theta), \: \dot{\theta} ]$. In this case the transformation $h(\vx)$ must be included in the value function as the transformed state $\vz$ is on the tube shaped manifold 
. Hence, the naive gradient of $V$ might not be in the manifold tangent space. 

\noindent
If the state transform is explicitly incorporated in the value function, this problem does not occur. The transformation can be included explicitly by $V(x;\: \psi) = f(h(\vx); \: \psi)$ and $\nabla_{x}V(x;\: \psi) = \partial f(h(\vx); \: \psi)/ \partial h \:\: \partial h(\vx)/\partial \vx$. 
In this case, the gradient of the transformed state is projected to the tangent space of the feature transform. Therefore, the value function gradient points in a plausible direction.

\begin{figure}[t]
    \centering
    \includegraphics[width=\columnwidth]{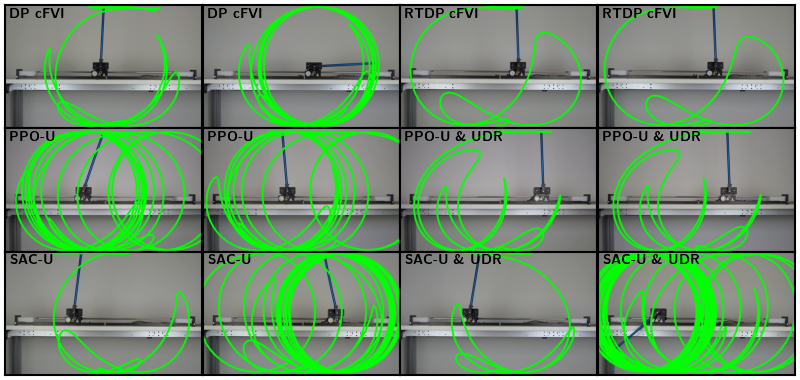}
    \vspace{-2.em}
    \caption{Tracked swing-up of the cartpole for the best and worst roll out. DP cFVI and RTDP cFVI can achieve the task. In the failure case, cFVI excels as the policy remains in the center. In contrast the deep RL baselines move the cart between the joint limits in case of failure. All baselines are shown in the appendix.}
    \label{fig:cartpole}\vspace{-1.2em}
\end{figure}

\begin{figure*}[t]
    \centering
    \includegraphics[width=\textwidth]{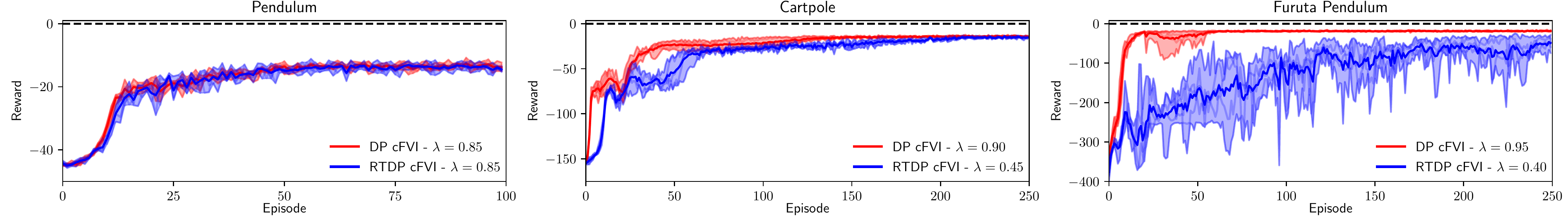}
    \vspace{-2.em}
    \caption{The learning curves for DP cFVI and RTDP cFVI averaged over $5$ seeds. The shaded area displays the \emph{min/max} range between seeds. DP cFVI achieves consistent learning of the optimal policy with very low variance between seeds. RTDP cFVI has a higher variation and learns slower compared to DP cFVI. RTDP cFVI needs to discover the steady-state state distribution first. Especially for the Furuta pendulum the range increases between seeds as minor changes in the policy lead to large changes of state-distribution.}
    \label{fig:learning_curves}\vspace{-0.8em}
\end{figure*} 

\vspace{-0.5em}
\section{Experiments} \label{sec:experiments}
\vspace{-0.5em}
In the following the experimental setup and results are described. The exact experiment specification, all qualitative plots and an additional ablation study on model ensembles is provided in the appendix. 

\begin{table*}[t]
\tiny
\centering
\renewcommand{\arraystretch}{1.1}
\caption{\footnotesize
Average rewards on the simulated and physical systems. The ranking describes the decrease in reward compared to the best result averaged on all systems. 
The initial state distribution during training is noted by~$\mu$. The dynamics are either deterministic model $\theta \sim \delta(\theta)$ or sampled using uniform domain randomization $\theta \sim \mathcal{U}(\theta)$.
During evaluation the roll outs start with the pendulum pointing downwards. DP cFVI obtains high rewards on all systems and is the highest  ranking algorithm compared to the baselines.
}
\setlength{\tabcolsep}{3.2pt}
\begin{tabular*}{\textwidth}{l c c c c c c c c | c c  c c | c}
\toprule
 & & & \multicolumn{2}{c}{\textbf{Simulated Pendulum}}  & \multicolumn{2}{c}{\textbf{Simulated Cartpole}} & \multicolumn{2}{c|}{\textbf{Simulated Furuta Pendulum}}   & \multicolumn{2}{c}{\textbf{Physical Cartpole}} & \multicolumn{2}{c|}{\textbf{Physical Furuta Pendulum}} & \textbf{Average} \\
& & & 
Success & Reward & Success & Reward & Success & Reward  & Success & Reward & Success & Reward  & \textbf{Ranking}
\\  
 \multicolumn{1}{c}{Algorithm} & $\mu$ & $\theta$ &   [$\%$] & [$\mu \pm 1.96 \sigma$] & [$\%$] & [$\mu \pm 1.96 \sigma$] & [$\%$] & [$\mu \pm 1.96 \sigma$] & [$\%$] & [$\mu \pm 1.96 \sigma$] & [$\%$] & [$\mu \pm 1.96 \sigma$] & [$\%$] \\
 \cmidrule(lr){1-3} \cmidrule(lr){4-5} \cmidrule(lr){6-7} \cmidrule(lr){8-9} \cmidrule(lr){10-11} \cmidrule(lr){12-13} \cmidrule(lr){14-14} 
DP cFVI (ours) & $\mathcal{U}$ & $\mathcal{U}(\theta)$ 
& $100.0$ &\textcolor{black}{$\mathbf{-030.5 \pm 000.8}$}
& $100.0$ &\textcolor{black}{$\mathbf{-024.2 \pm 002.1}$}
& $100.0$ &\textcolor{black}{$\mathbf{-027.7 \pm 001.6}$}
& $73.3$ & $-143.7 \pm 210.4$ 
& $100.0$ &\textcolor{black}{$\mathbf{-082.1 \pm 007.6}$}
& \textcolor{black}{$\mathbf{-008.8}$}
\\
RTDP cFVI (ours) & $\mathcal{U}$ & $\mathcal{U}(\theta)$ 
& $100.0$ &\textcolor{black}{$\mathbf{-031.1 \pm 001.4}$}
& $100.0$ &\textcolor{black}{$\mathbf{-024.9 \pm 001.6}$}
& $100.0$ & $-040.1 \pm 002.7$ 
& $100.0$ &\textcolor{black}{$\mathbf{-101.1 \pm 029.0}$}
& $00.0$ & $-1009.9 \pm 004.5$ 
& \textcolor{black}{$-240.4$}
\\
\cmidrule(lr){1-3} \cmidrule(lr){4-5} \cmidrule(lr){6-7} \cmidrule(lr){8-9} \cmidrule(lr){10-11} \cmidrule(lr){12-13} \cmidrule(lr){14-14}
SAC & $\mathcal{N}$ & $\mathcal{U}(\theta)$ 
& $100.0$ &\textcolor{black}{$\mathbf{-031.1 \pm 000.1}$}
& $100.0$ & $-026.9 \pm 003.2$ 
& $100.0$ & $-029.3 \pm 001.5$ 
& $00.0$ & $-518.6 \pm 028.1$ 
& $86.7$ & $-330.7 \pm 799.0$ 
& \textcolor{black}{$-148.4$}
\\
SAC \& UDR & $\mathcal{N}$ & $\delta(\theta))$ 
& $100.0$ & $-032.9 \pm 000.6$ 
& $100.0$ & $-029.7 \pm 004.6$ 
& $100.0$ & $-032.0 \pm 001.1$ 
& $100.0$ & $-394.8 \pm 382.8$ 
& $100.0$ & $-181.4 \pm 157.9$ 
& \textcolor{black}{$-092.3$}
\\
SAC & $\mathcal{U}$ & $\mathcal{U}(\theta)$ 
& $100.0$ &\textcolor{black}{$\mathbf{-030.6 \pm 001.4}$}
& $100.0$ &\textcolor{black}{$\mathbf{-024.2 \pm 001.4}$}
& $100.0$ &\textcolor{black}{$\mathbf{-028.1 \pm 002.0}$}
& $53.3$ & $-144.5 \pm 204.0$ 
& $100.0$ & $-350.8 \pm 433.3$ 
& \textcolor{black}{$-076.1$}
\\
SAC \& UDR & $\mathcal{U}$ & $\mathcal{U}(\theta)$ 
& $100.0$ & $-031.4 \pm 002.5$ 
& $100.0$ &\textcolor{black}{$\mathbf{-024.2 \pm 001.3}$}
& $100.0$ &\textcolor{black}{$\mathbf{-028.1 \pm 001.3}$}
& $40.0$ & $-296.4 \pm 418.9$ 
& $100.0$ & $-092.3 \pm 064.1$ 
& \textcolor{black}{$-042.4$}
\\
\cmidrule(lr){1-3} \cmidrule(lr){4-5} \cmidrule(lr){6-7} \cmidrule(lr){8-9} \cmidrule(lr){10-11} \cmidrule(lr){12-13} \cmidrule(lr){14-14}
DDPG & $\mathcal{N}$ & $\mathcal{U}(\theta)$ 
& $100.0$ &\textcolor{black}{$\mathbf{-031.1 \pm 000.4}$}
& $98.0$ & $-050.4 \pm 285.6$ 
& $100.0$ & $-030.5 \pm 003.5$ 
& $06.7$ & $-536.7 \pm 262.7$ 
& $46.7$ & $-614.1 \pm 597.8$ 
& \textcolor{black}{$-242.6$}
\\
DDPG \& UDR & $\mathcal{N}$ & $\delta(\theta))$ 
& $100.0$ & $-032.5 \pm 000.5$ 
& $100.0$ & $-027.4 \pm 002.3$ 
& $100.0$ & $-034.6 \pm 009.8$ 
& $00.0$ & $-517.9 \pm 117.6$ 
& $86.7$ & $-192.7 \pm 404.8$ 
& \textcolor{black}{$-119.3$}
\\
DDPG & $\mathcal{U}$ & $\mathcal{U}(\theta)$ 
& $100.0$ & $-031.5 \pm 000.7$ 
& $100.0$ & $-028.2 \pm 005.5$ 
& $100.0$ & $-030.0 \pm 001.7$ 
& $06.7$ & $-459.4 \pm 248.3$ 
& $100.0$ & $-146.6 \pm 218.3$ 
& \textcolor{black}{$-092.9$}
\\
DDPG \& UDR & $\mathcal{U}$ & $\mathcal{U}(\theta)$ 
& $100.0$ & $-032.5 \pm 003.6$ 
& $100.0$ & $-027.2 \pm 001.0$ 
& $100.0$ & $-032.1 \pm 001.5$ 
& $00.0$ & $-318.1 \pm 063.4$ 
& $100.0$ & $-156.7 \pm 246.4$ 
& \textcolor{black}{$-068.8$}
\\
\cmidrule(lr){1-3} \cmidrule(lr){4-5} \cmidrule(lr){6-7} \cmidrule(lr){8-9} \cmidrule(lr){10-11} \cmidrule(lr){12-13} \cmidrule(lr){14-14}
PPO & $\mathcal{N}$ & $\mathcal{U}(\theta)$ 
& $100.0$ & $-032.0 \pm 000.2$ 
& $100.0$ & $-031.5 \pm 007.2$ 
& $100.0$ & $-081.1 \pm 018.3$ 
& $00.0$ & $-287.9 \pm 068.8$ 
& $33.3$ & $-718.7 \pm 456.1$ 
& \textcolor{black}{$-240.9$}
\\
PPO \& UDR & $\mathcal{N}$ & $\delta(\theta))$ 
& $100.0$ & $-032.3 \pm 000.6$ 
& $100.0$ & $-084.0 \pm 007.8$ 
& $100.0$ & $-040.9 \pm 004.6$ 
& $00.0$ & $-435.4 \pm 111.9$ 
& $46.7$ & $-935.7 \pm 711.6$ 
& \textcolor{black}{$-338.6$}
\\
PPO & $\mathcal{U}$ & $\mathcal{U}(\theta)$ 
& $100.0$ & $-033.4 \pm 004.7$ 
& $99.0$ & $-039.7 \pm 045.7$ 
& $100.0$ & $-038.2 \pm 013.1$ 
& $00.0$ & $-183.8 \pm 018.0$ 
& $60.0$ & $-755.3 \pm 811.0$ 
& \textcolor{black}{$-206.1$}
\\
PPO \& UDR & $\mathcal{U}$ & $\mathcal{U}(\theta)$ 
& $100.0$ & $-035.6 \pm 003.1$ 
& $100.0$ & $-044.8 \pm 021.4$ 
& $100.0$ & $-048.5 \pm 006.2$ 
& $40.0$ & $-143.8 \pm 016.1$ 
& $100.0$ &\textcolor{black}{$\mathbf{-080.6 \pm 010.8}$}
& \textcolor{black}{$-044.0$}
\\
\bottomrule
\end{tabular*}
\vspace{-2.5em}
\label{table:cFVI_results}
\end{table*}

\vspace{-0.5em}
\subsection{Experimental Setup}
\vspace{-0.5em}
\textbf{Systems} The algorithms are compared using the standard non-linear control benchmark, the \emph{swing-up} of under-actuated systems. Specifically, we apply the algorithms to the torque-limited pendulum, cartpole (Figure~\ref{fig:cartpole}) and Furuta pendulum (Figure~\ref{fig:furuta}). For the physical systems the dynamics model of the manufacturer is used \cite{quanser}. The control frequency is optimized for each algorithm. The task is considered solved, if the pendulum angle $\alpha$ is below $\pm 5^{\circ}$ degree for the last second.

\textbf{Baselines} The performance is compared to the actor-critic deep RL methods: DDPG \cite{lillicrap2015continuous}, SAC \cite{haarnoja2018soft} and PPO \cite{schulman2017proximal}. We compare two different initial state distributions $\mu$. For $\mu = \mathcal{U}$, the initial pendulum angle $\alpha_{0}$ is sampled uniformly $\alpha_{0} \sim \mathcal{U}(-\pi, +\pi)$. For $\mu = \mathcal{N}$, the initial angle is sampled from a Gaussian distribution with the pendulum facing downwards $\alpha_{0} \sim \mathcal{N}(\pm \pi, \sigma)$. The uniform sampling avoids the exploration problem and generates a larger state distribution of the optimal policy. In addition we augment each baseline with uniform domain randomization \cite{muratore2018domain}.

\begin{figure*}[t]
    \centering
    \includegraphics[width=\textwidth]{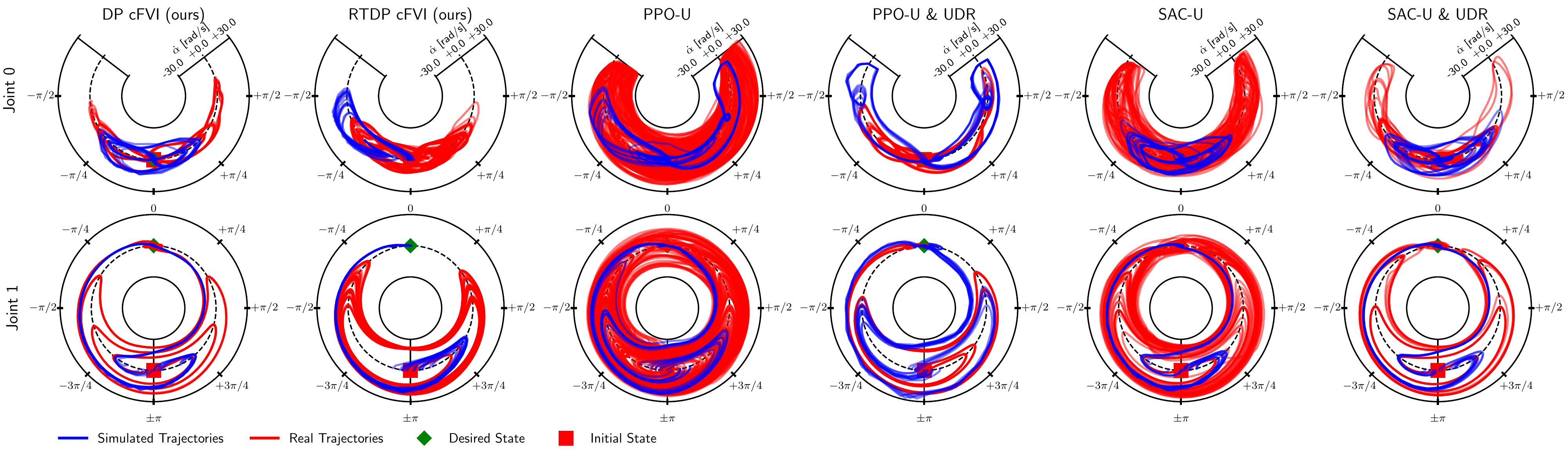}
    \vspace{-2.em}
    \caption{The simulated (blue) and real world (red) roll outs for the Furuta pendulum controlled by cFVI and the baselines. The deviation from the dashed line denotes the joint velocity. The distribution shift is clearly visible as the blue and red trajectories do not overlap, e.g., cFVI requires more pre-swings on the physical system. For PPO-U the distribution shift causes random exploration of the complete state-domain. cFVI achieves the best qualitative performance as it can swing-up the pendulum from both directions and achieves nearly identical roll-outs. In contrast the deep RL baselines have higher distribution shift due to the dynamics mismatch. The trajectories are also less consistent and vary significantly between roll outs. The plots for all baselines and the physical cartpole are shown in the appendix.}
    \label{fig:furuta_rollout}\vspace{-0.8em}
\end{figure*} 

\vspace{-0.5em}
\subsection{Simulation Results} \vspace{-0.5em}
The average rewards of our method - Continuous Fitted Value Iteration (cFVI) and the baselines are summarized in Table \ref{table:cFVI_results}. The learning curves are shown in Figure \ref{fig:learning_curves}. In simulation, DP cFVI, the offline version of cFVI with fixed dataset, achieves the best rewards for all three systems and marginally outperforms SAC in terms of average reward. It's important to point out that identical mean rewards do not imply a similar qualitative performance. For example, DP cFVI and SAC-U achieve identical mean reward on the cartpole despite having very different closed-loop dynamics (Figure~11 - Appendix). Notably, RTDP cFVI, which uses the state distribution of the current policy rather than a fixed dataset, solves the tasks for all three systems. For the pendulum and the cartpole, the reward is comparable to the best performing algorithms. While for the Furuta pendulum, the reward is lower (higher) than the best (worst) performing Deep RL baselines.

The learning curves highlight that the variance between seeds is very low for DP cFVI. It is important to note that Figure \ref{fig:learning_curves} shows the min-max range of the different seeds rather than the confidence intervals or the standard error. Therefore, the exact weight initialization and sampling of the fixed dataset, which is different for each seed, do not have a large impact on the learning progress or obtained solution. For RTDP cFVI the variance between seed increases and the learning speed decreases compared to DP cFVI. The pendulum is an outlier for RTDP cFVI, as the state-distribution is nearly independent of the current policy if the pendulum angle is initialized uniformly. Especially for the Furuta pendulum the variance increases and learning speed decreases. For this system slightly different policies can cause vastly different state distributions due to the low condition number of the mass-matrix. Therefore, RTDP cFVI takes longer to reach a stationary distribution and increase the obtained reward. 

In general it is important to point out that the exploration for RTDP cFVI is more challenging compared to other deep RL methods as RTDP cFVI uses a higher control frequency. Due to the shorter steps simple Gaussian noise averages out and does not lead to a large exploration in state space. Therefore, the performance and variance of RTDP cFVI could be improved using a more coherent exploration strategy in future work. 

\vspace{-0.5em}
\subsection{Ablation Study - $N$-step Value Targets}\vspace{-0.5em}
The learning curves, averaged over $5$ seeds, for different $n$-step value targets are shown in Figure \ref{fig:ablation_lambda}. When increasing~$\lambda$, which implicitly increases the $n$-step horizon (section \ref{sec:cFVI}), the convergence rate to the optimal value function increases. This increased learning speed is expected as \Eqref{eq:contraction} shows that the convergence rate depends on $\gamma^{n}$ with $\gamma < 1$. While the learning speed measured in iterations increases with $\lambda$, the computational complexity also increases. Longer horizons require to simulate $n$ sequential steps increasing the required wall-clock time. Therefore, the computation time increases exponentially with increasing $\lambda$. For example the forward roll out in every iteration of the pendulum increases exponentially from $0.4$s ($\lambda = 0.01$) 
to $56.4$s ($\lambda = 0.99$)\footnote{Wall-clock time on an AMD 3900X and a NVIDIA RTX 3090}. For the Furuta pendulum and the cartpole extremely long horizons of $100+$ steps start to diverge as the value function target over fits to the untrained value function. For RTDP cFVI, the horizons must smaller, i.e., $10$ - $20$ steps. For longer horizons the predicted rollout overfits to the value function outside of the current state distribution, which prevents learning or leads to pre-mature convergence (see Appendix Figure 9). Therefore, DP cFVI works bests with $\lambda \in [0.85, 0.95]$ and RTDP cFVI with $\lambda \in [0.45 - 0.55]$.

\vspace{-0.5em}
\subsection{Physical Experiments}\vspace{-0.5em}
The quantitative results of the physical experiments are summarized in Table \ref{table:cFVI_results}\footnote{Task Video available at: \url{https://sites.google.com/view/value-iteration}}, with additional plots and baselines in the appendix. DP cFVI achieves high reward on both physical systems and outperforms the baselines. Only PPO-U UDR achieves comparable performance on both physical systems. On average the performance of the deep RL variants increases with with larger initial state distribution.

The trajectories of the Furuta Pendulum are shown in Figure~\ref{fig:furuta_rollout}. The distribution shift between the physical and simulated trajectories is clearly visible as the trajectories largely deviate. DP cFVI achieves a highly repeatable performance on the noisy physical system. All 15 roll-outs have a similar trajectory. The baselines require domain randomization and uniform initial distribution to achieve a reward comparable to DP cFVI. All other deep RL baselines have a high variance between roll-outs. For example the simulation gap causes PPO-U to randomly explore the complete state space. 
RTDP cFVI does not achieve the swing-up but fails gracefully as it stabilizes the pendulum on a stable limit cycle and does not hit the joint limits. Furthermore, this limit cycle is highly repetitive and the variance in reward is low. For many robotic tasks, it is preferable to fail gracefully rather than hitting the limits and solving the task. 

\begin{figure*}[t]
    \centering
    \includegraphics[width=\textwidth]{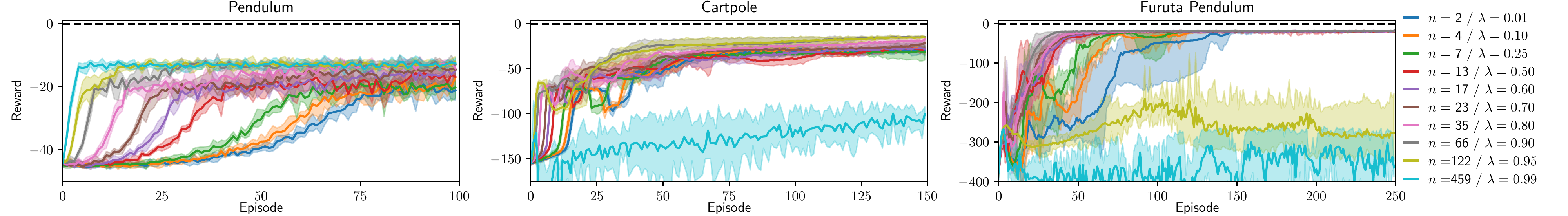}
    \vspace{-2.em}
    \caption{The learning curves averaged over $5$ seeds for the $n$-step value function target. 
    The shaded area displays the \emph{min/max} range between seeds. The step count is selected such that~$\lambda^{n} \coloneqq 10^{-4}$. Increasing the horizon of the value function target increases the convergence rate to the optimal value function. For very long horizons the learning diverges as it over fits to the current value function approximation. Furthermore, the performance of the optimal policy also increases with roll out length.
    }
    \label{fig:ablation_lambda} \vspace{-0.8em}
\end{figure*} 

On the physical cartpole, DP cFVI obtains a higher reward than RTDP cFVI, if the swing-up is successful. However, RTDP cFVI has a higher average reward across all roll-outs as DP cFVI only achieves a $73\%$ success rate. If DP cFVI fails, the resulting behavior is a deterministic limit-cycle where the cart is centered. The policy decelerates the pendulum, but due to the backslash in the linear actuator the applied force is not sufficient. 
Most deep RL baselines do not achieve a consistent swing-up. Only PPO-U UDR, SAC-U achieve a repeatable high reward. SAC-N UDR achieves the swing-up but always hits the joint limit and obtains a relative low reward. The failure cases of all deep RL baselines are stochastic and repeatedly hit the joint limit. If the pendulum over-swings the cart starts to oscillate between both joint limits. 
Notably, domain randomization does not improve the performance on the physical cartpole. For this system, the simulation gap originates from backslash and friction in the actuator rather than the uncertainty of the model parameters. As the actuation is not simulated, these parameters cannot be randomized. For the cartpole, the larger initial state distribution is required to achieve high rewards on the physical system. All baselines with a gaussian initial state distribution achieve only a low reward. See appendix for qualitative simulated and physical results. 

\vspace{-0.5em}
\section{Related Work}\vspace{-0.5em}
Continuous-time RL started with the seminal work of Doya \yrcite{doya2000reinforcement}. Since then, various approaches have been proposed to solve the Hamilton-Jacobi-Bellman (HJB) differential equation with the machine learning toolset. These methods can be divided into trajectory and state-space based methods. 

Trajectory based methods solve the stochastic HJB along a trajectory to obtain the optimal trajectory. For example path integral control uses the non-linear, control-affine dynamics with quadratic action costs to simplify the HJB to a linear partial differential equation~\cite{kappen2005linear, todorov2007linearly, theodorou2010reinforcement, pan2014model}. This differential equation can be transformed to a path-integral using the Feynman-Kac formulae. The path-integral can then be solved using Monte Carlo sampling to obtain the optimal state and action sequence. Recently, this approach has been used in combination with deep networks~\cite{rajagopal2016neural, pereira2019learning, pereira2020safe}.

State-space based methods solve the HJB globally to obtain a optimal non-linear controller applicable on the complete state domain. Classical approaches discretize the continuous spaces into a grid and solve the HJB or the robust Hamilton-Jacobi-Isaac (HJI) using a PDE solver \cite{bansal2017hamilton}. To overcome the curse of dimensionality of the grid based methods, machine learning methods that use function approximation and sampling have been proposed. For example, regression based approaches solved the HJB by fitting a radial-basis-function networks \cite{doya2000reinforcement}, deep networks \cite{tassa2007least, lutter2019hjb, kim2020hamilton}, kernels \cite{hennig2011optimal} or polynomial functions \citep{yang2014reinforcement, liu2014neural} to minimize the HJB residual. The naive optimization of this objective, while successful for different PDEs \cite{raissi2017physics_1, raissi2017physics_2, raissi2020hidden}, does not work for the HJB as the exact location of the boundary condition is unknown. Therefore, authors used various optimization tricks such as annealing the noise \cite{tassa2007least} or the discounting \cite{lutter2019hjb} to obtain the optimal value function.

In this work we propose a DP based algorithm. Instead of trying to solve the HJB PDE directly via regression as previous learning-based methods, we leverage the continuous-time domain and solve the HJB by iteratively applying the Bellman optimality principle. The resulting algorithm has a simpler and more robust optimization compared to the previous approaches using direct regression.   

\vspace{-0.5em}
\section{Conclusion and Future Work}\vspace{-0.5em}
We proposed continuous fitted value iteration (cFVI). This algorithm enables dynamic programming with known model for problems with continuous states and actions without discretization. Therefore, cFVI avoids the curse of dimensionality associated with discretization and the policy optimization of actor-critic approaches. Exploiting the insights from the continuous-time formulation, the optimal action can be computed analytically. This closed-form solution permits the efficient computation of the value function target and enables the extension of value iteration. 
The non-linear control experiments showed that value iteration on the compact state space has the same quantitative performance as deep RL methods in simulation. On the sim2real tasks, DP cFVI excels compared deep RL algorithms that include domain randomization and use uniform initial state distribution.

In future work, we plan to extend cFVI to (1) offline/batch model-based RL, (2) stochastic maximum entropy policies and (3) explicit model robustness. cFVI uses a fixed data set and hence, can be combined with learning control-affine models to the offline RL benchmark \cite{gulcehre2020rl,mandlekar2019roboturkreal}. 
We plan to extend cFVI to maximum entropy and stochastic policies to improve the exploration of RTDP cFVI. Finally, we plan to incorporate robustness w.r.t. to changes in dynamics by using the adversarial continuous RL formulation culminating in the Hamilton-Jacobi-Isaacs differential equation rather than the HJB \cite{isaacs1999differential}. 

{\small
\section*{Acknowledgements}
M. Lutter was an intern at Nvidia during this project. 
A. Garg was partially supported by CIFAR AI Chair. 
We also want to thank Fabio Muratore, Joe Watson and the ICML reviewers for their feedback. Furthermore, we want to thank the open-source projects SimuRLacra \cite{simurlacra}, MushroomRL \cite{deramo2020mushroomrl}, NumPy \cite{numpy} and PyTorch \cite{pytorch}.
}

\bibliography{refs}
\bibliographystyle{icml2021}

\clearpage
\section{Appendix}

\begin{figure*}[h]
    \centering
    \includegraphics[width=\textwidth]{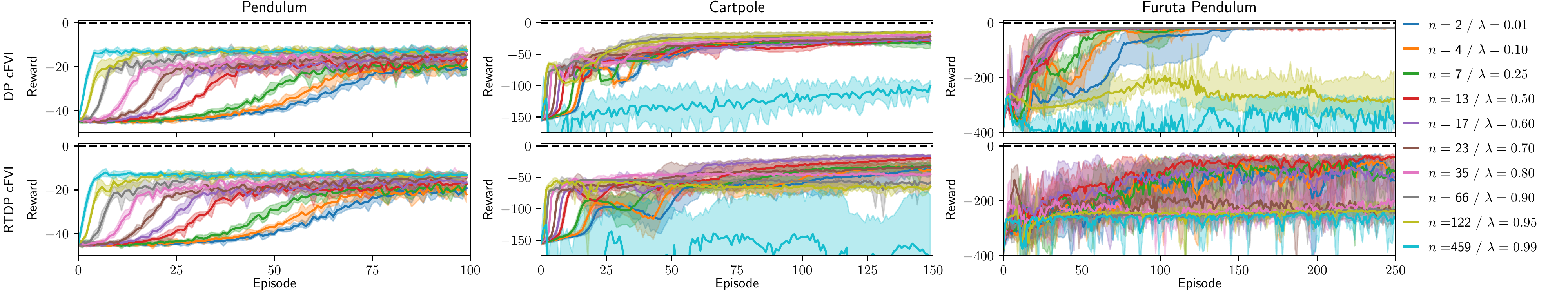}
    \vspace{-2.5em}
    \caption{The learning curves averaged over $5$ seeds for the $n$-step value function target. 
    The shaded area displays the \emph{min/max} range between seeds. The step count is selected such that~$\lambda^{n} \coloneqq 10^{-4}$. Increasing the horizon of the value function target increases the convergence rate to the optimal value function. For very long horizons the learning diverges as it over fits to the current value function approximation. Furthermore, the performance of the optimal policy also increases with roll out length.
    }
    \label{fig:ablation_lambda_2rows}
\end{figure*} 

\section*{Proof of Theorem \ref{theorem:opt_policy}}

\begin{theorem*}
If the dynamics are control affine (\Eqref{eq:affine_dyn}), the reward is separable w.r.t. to state and action (\Eqref{eq:seperable_rwd}) and the action cost $g_c$ is positive definite and strictly convex, the continuous time optimal policy $\pi^{k}$ w.r.t. $V^{k}$ is described by
\begin{gather}
    \pi^{k}(\vx)  = \nabla \tilde{g}_c \left( \mB(\vx)^{T} \nabla_xV^{k} \right) \label{eq:optimal_policy}
\end{gather}
with the convex conjugate $\tilde{g}$ of $g$ and the Jacobian of $V$ w.r.t. the system state $\nabla_x V$.
\end{theorem*}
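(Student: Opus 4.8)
The plan is to start from the one-step discrete value-iteration target and pass to the continuous-time limit $\Delta t \to 0$, which reduces the action maximization to a finite-dimensional strictly concave program that can be solved in closed form by convex duality. First I would write the value target from \Eqref{eq:fvi_update} as $V_{\text{tar}}(\vx_t) = \max_{\vu}\, r(\vx_t, \vu) + \gamma\, V^k(f(\vx_t, \vu))$ and substitute the identities relating the discrete reward, discount, and dynamics to their continuous-time counterparts, namely $r = \Delta t\, r_c$, $\gamma = \exp(-\rho\,\Delta t)$, and the Euler step $f(\vx_t, \vu) = \vx_t + \Delta t\, f_c(\vx_t, \vu)$. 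A first-order Taylor expansion of $V^k$ about $\vx_t$ then gives $V^k(\vx_{t+1}) = V^k(\vx_t) + \Delta t\,(\nabla_x V^k)^{T} f_c(\vx_t, \vu) + \mathcal{O}(\Delta t^2)$, where $\mathcal{O}(\Delta t^2)$ collects the higher-order remainder.

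Next I would substitute the control-affine form $f_c = \va + \mB\vu$ from \Eqref{eq:affine_dyn} and the separable reward $r_c = q_c - g_c$ from \Eqref{eq:seperable_rwd}. After factoring out the common $\Delta t$ and letting $\Delta t \to 0$ (so that $\gamma \to 1$ and the remainder vanishes), the only action-dependent terms that survive are $(\nabla_x V^k)^{T} \mB(\vx_t)\, \vu - g_c(\vu)$, which recovers the maximization $\vu^{*} = \argmax_{\vu}\, (\nabla_x V^k)^{T} \mB(\vx_t)\,\vu - g_c(\vu)$ of \Eqref{eq:final_opt}. All state-only terms drop out of the $\argmax$ since they do not depend on $\vu$.

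Finally, since $g_c$ is strictly convex the objective is strictly concave in $\vu$, so the unique maximizer satisfies the stationarity condition $\mB(\vx_t)^{T} \nabla_x V^k = \nabla g_c(\vu^{*})$. The crucial identity is the convex-analytic fact that strict convexity makes $\nabla g_c$ invertible with $(\nabla g_c)^{-1} = \nabla \tilde{g}_c$, the gradient of the convex conjugate; inverting the stationarity condition then yields $\vu^{*} = \nabla \tilde{g}_c(\mB(\vx_t)^{T} \nabla_x V^k)$, which is exactly \Eqref{eq:optimal_policy}.

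The main obstacle I anticipate is not this algebra but the justification for interchanging the limit $\Delta t \to 0$ with the $\argmax$. I would argue that, under continuity of $\nabla_x V^k$ together with coercivity and properness of the strictly convex $g_c$, the finite-$\Delta t$ maximizers depend continuously on the problem data and converge to the maximizer of the limiting objective, so that the continuous-time optimal action is genuinely the closed-form expression above rather than merely a formal limit. This is the only place where regularity of $V^k$ is implicitly invoked, and I would flag it as the step that most warrants a careful statement of hypotheses.
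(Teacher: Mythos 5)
Your proposal is correct and follows essentially the same route as the paper's proof: Taylor-expand $V^k$ around $\vx_t$ in the one-step value-iteration target, substitute the control-affine dynamics and separable reward, let $\Delta t \to 0$ so the discount and higher-order terms drop out, and invert the stationarity condition $\nabla g_c(\vu^*) = \mB^T \nabla_x V^k$ via the convex conjugate. Your closing remark about justifying the interchange of the limit with the $\argmax$ is a legitimate point of rigor that the paper passes over silently, but it does not change the argument.
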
 

\begin{proof}
This proof follows the derivation of Lutter et. al. \yrcite{lutter2019hjb}. This prior work derived the optimal policy $\pi^{*}$ using the Hamilton Jacobi Bellman differential equation (HJB) and generalized the special case described by Doya \yrcite{doya2000reinforcement}. The value iteration update (\Eqref{eq:fvi_update}) is defined as 
\begin{align*}
    V_{\text{tar}}(\vx_t) = \max_{\vu} \: r(\vx_{t}, \vu) + \gamma V(f(\vx_{t}, \vu); \: \psi_k).
\end{align*}
Substituting the assumptions and using the Taylor expansion of the Value function, i.e., 
\begin{align*}
    V(f(\vx_t, \vu)) = V(\vx_t) + \nabla_{x}V^{T} f_c(\vx_t, \vu) \: \Delta t + \mathcal{O}(\Delta t, \vx_t, \vu) \: \Delta t,
\end{align*}
this update can be rewritten - omitting all functional dependencies for brevity - as
\begin{align*}
V_{\text{tar}}  &= \max_{\vu} \:\: r + \gamma V + \gamma \nabla_{x}V^{T} f_c  \Delta t + \gamma \mathcal{O} \Delta t\\
                &= \max_{\vu} \:\: \left[ \gamma \nabla_{x} V^{T} \left(\va + \mB \vu \right) + \gamma \: \mathcal{O} - g_c \right] \: \Delta t + \gamma V + q_c \: \Delta t
\end{align*}
with the higher order Terms $\mathcal{O}(\Delta t, \vx_t, \vu)$. Therefore, the optimal action is defined as 
\begin{align*}
    \vu^{*}_{t} = \argmax_{\vu} \gamma \nabla_{x}V^{T} \left(\va + \mB \vu \right) + \gamma \: \mathcal{O}(\Delta t, \vx_t, \vu) - g_c(\vu).
\end{align*}
In the continuous time limit, the higher order terms~$\mathcal{O}(\Delta t, \vx_t, \vu)$ disappear as these depend on $\Delta t$, i.e., i.e., $\lim_{\Delta t \rightarrow 0} \mathcal{O}(\Delta t, \vx_t, \vu) = 0$. The action is also independent of the discounting as $\lim_{\Delta t \rightarrow 0} \gamma = 1$. Therefore, the continuous time optimal action is defined as 
\begin{align*} 
    \vu^{*}_{t} = \argmax_{\vu} \:  \nabla_{x}V^{T} \mB(\vx) \vu_t - g_{c}(\vu).
\end{align*}
This optimization can be solved analytically as $g_c$ is strictly convex and hence $\nabla g_{c}(\vu) = \vw$ is invertible, i.e., $\vu = \left[ \nabla g_{c}\right]^{-1}(\vw) \coloneqq \nabla \tilde{g}_{c}(\vw)$ with the convex conjugate $\tilde{g}$. The optimal action is described by
\begin{gather*}
\mB(\vx)^T \nabla_{x}V^{*} - \nabla g_c(\vu) \coloneqq 0 \hspace{5pt}
\Rightarrow \hspace{5pt} \vu^{*} = \nabla \tilde{g}_c \left( \mB(\vx)^{T} \nabla_{x}V^{k} \right).
\end{gather*}
Therefore, the value function update can be rewritten by substituting the optimal action, i.e., 
\begin{gather*}
    V_{\text{tar}}(\vx_t) = r\left(\vx_{t}, \nabla \tilde{g}\left(\mB(\vx_t)^{T} \nabla_{x}V^{k}\right)\right) + \gamma V^{k}(\vx_{t+1}; \: \psi_k) \\
    \text{with} \hspace{15pt} \vx_{t+1} = f\left(\vx_{t}, \nabla \tilde{g}(\mB(\vx_t)^{T} \nabla_{x}V^{k}) \right).
\end{gather*}
\end{proof}

\begin{figure*}[h]
    \centering
    \includegraphics[width=\textwidth]{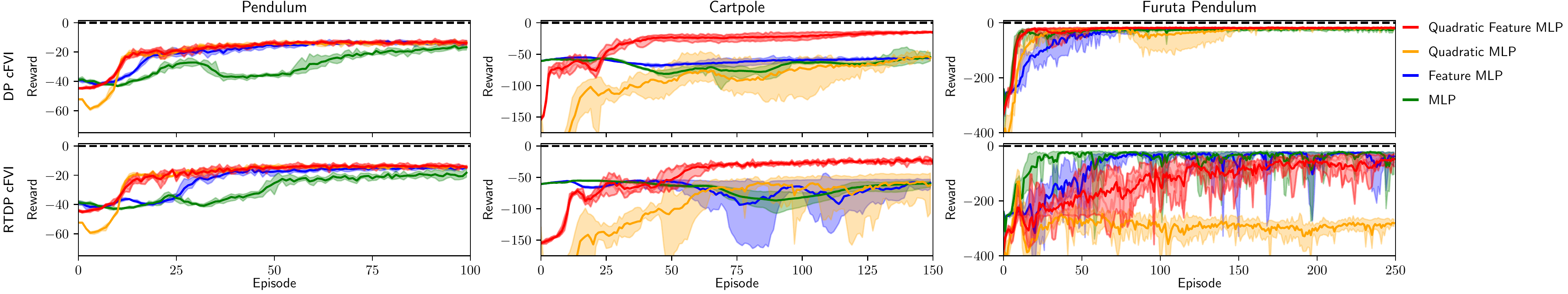}
    \vspace{-2.5em}
    \caption{The learning curves averaged over $5$ seeds for the different model architectures. The shaded area displays the \emph{min/max} range between seeds. All network architectures are capable of learning the value function and policy for most of the tasks. The locally quadratic network architecture increases learning speed compared to the baselines. The structured architecture acts as an inductive bias that shapes the exploration. The global maximum of the locally quadratic value function is guaranteed at $\vx_{\text{des}}$ and hence the initial policy performs hill-climbing towards this point.
    }
    \label{fig:ablation_architecture}
\end{figure*} 

\begin{figure*}[h]
    \centering
    \includegraphics[width=\textwidth]{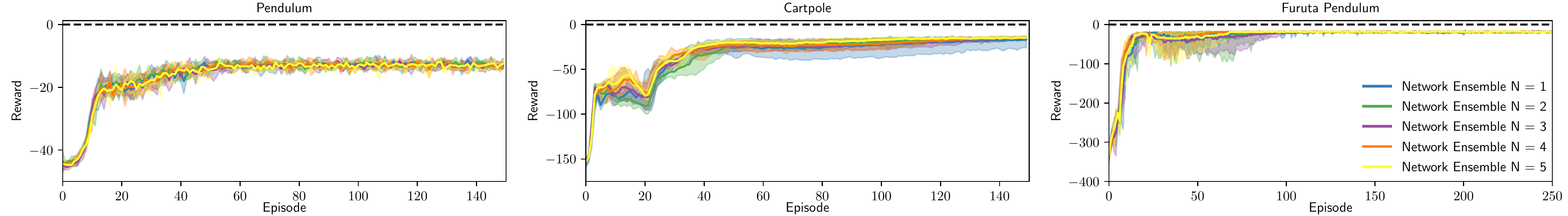}
    \vspace{-2.5em}
    \caption{The learning curves averaged over $5$ seeds with different model ensemble sizes $N$. The shaded area displays the \emph{min/max} range between seeds. The performance of the optimal policy is not significantly affected by the model ensemble. For the cartpole and especially the Furuta pendulum, the larger model ensembles stabilize the training and achieve faster learning and exhibit less variations between seeds.}
    \label{fig:ablation_ensemble}
\end{figure*}

\section*{Experimental Setup}
\textbf{Systems} The performance of the algorithms is evaluated using the \emph{swing-up} the torque-limited pendulum, cartpole and Furuta pendulum. The physical cartpole (Figure \ref{fig:appendix_cartpole}) and Furuta pendulum (Figure \ref{fig:appendix_furuta}) are manufactured by Quanser~\yrcite{quanser}. For simulation, we use the equations of motion and physical parameters of the supplier. Both systems have very different characteristics. The Furuta pendulum consists of a small and light pendulum ($24$g, $12.9$cm) with a strong direct-drive motor. Even minor differences in the action cause large changes in acceleration due to the large amplification of the mass-matrix inverse. Therefore, the main source of uncertainty for this system is the uncertainty of the model parameters. The cartpole has a longer and heavier pendulum ($127$g, $33.6$cm). The cart is actuated by a geared cogwheel drive. Due to the larger masses the cartpole is not so sensitive to the model parameters. The main source of uncertainty for this system is the friction and the backlash of the linear actuator. The systems are simulated and observed with $500$Hz. The control frequency varies between algorithm and is treated as hyperparameter.

\textbf{Baselines} The control performance is compared to Deep Deterministic Policy Gradients (DDPG) \cite{lillicrap2015continuous}, Soft Actor Critic (SAC) \cite{haarnoja2018soft} and Proximal Policy Optimization (PPO) \cite{schulman2017proximal}. The baselines are augmented with uniform domain randomization (UDR) \cite{muratore2018domain}. For the experiments the open-source implementations of MushroomRL \cite{deramo2020mushroomrl} are used. We compare two different initial state distributions ($\mu$). First, the initial pendulum angle $\theta_{0}$ is sampled uniformly, i.e. $\theta_{0} \sim \mathcal{U}(-\pi, +\pi)$. Second, the initial angle is sampled from a Gaussian distribution with the pendulum facing downwards, i.e., $\theta_{0} \sim \mathcal{N}(\pm \pi, \sigma)$. The uniform sampling avoids the exploration problem and generates a larger state distribution of the optimal policy.

\textbf{Reward Function} The desired state for all tasks is the upward pointing pendulum at $\vx_{\text{des}} = \mathbf{0}$. The state reward is described by $q_c(\vx) = -(\vz - \vz_{\text{des}})^T \mQ (\vz - \vz_{\text{des}})$ with the positive definite matrix~$\mQ$ and the transformed state $\vz$. For continuous joints the joint state is transformed to $z_i = \pi^{2} \sin(x_i)$. The action cost is described by $g_{c}(\vu) = - 2 \: \bm{\beta} \: \vu_{\text{max}} / \pi \log \cos(\pi \: \vu / (2 \: \vu_{\text{max}}))$ with the actuation limit~$\vu_{\text{max}}$ and the positive constant $\bm{\beta}$. This barrier shaped cost bounds the optimal actions. The corresponding policy is shaped by~$\nabla\tilde{g}(\vw) = 2\: \vu_{\text{max}} / \pi \: \tan^{-1}(\vw / \bm{\beta})$. For the experiments, the reward parameters are
\begin{align*}
    \text{Pendulum:}& &\mQ_{\text{diag}} &= \left[\textcolor{white}{0}1.0, \: 0.1 \right], & \beta &= 0.5\\
    \text{Cartpole:}& &\mQ_{\text{diag}} &= \left[25.0, \:1.0,\: 0.5, \: 0.1 \right], & \beta &= 0.1\\
    \text{Furuta Pendulum:}& & \mQ_{\text{diag}} &= \left[\textcolor{white}{0}1.0, \: 5.0, \: 0.1, \: 0.1 \right], & \beta &= 0.1\\
\end{align*}

\textbf{Evaluation} 
The rewards are evaluated using $100$ roll outs in simulation and $15$ roll outs on the physical system. If not noted otherwise, each roll out lasts $15s$ and starts with the pendulum downward . This duration is much longer than the required time to swing up. The pendulum is considered balancing, if the pendulum angle is below $\pm 5^{\circ}$ degree for every sample of the last second.

\section*{Extended Experimental Results}

\subsection*{Ablation Study - $N$-step Value Targets}
The learning curves for the ablation study highlighting the importance are shown in Figure \ref{fig:ablation_lambda_2rows}. This figure contains in contrast to Figure \ref{fig:ablation_lambda} also RTDP cFVI. When increasing~$\lambda$, which implicitly increases the $n$-step horizon (Section \ref{sec:cFVI}), the convergence rate to the optimal value function increases. This increased learning speed is expected as \Eqref{eq:contraction} shows that the convergence rate depends on $\gamma^{n}$ with $\gamma < 1$. While the learning speed measured in iterations increases with $\lambda$, the computational complexity also increases. Longer horizons require to simulate $n$ sequential steps increasing the required wall-clock time. Therefore, the computation time increases exponentially with increasing $\lambda$. For example the forward roll out in every iteration of the pendulum increases exponentially from $0.4$s ($\lambda = 0.01$) 
to $56.4$s ($\lambda = 0.99$). For the Furuta pendulum and the cartpole extremely long horizons of $100+$ steps start to diverge as the value function target over fits to the untrained value function. For RTDP cFVI, the horizons must smaller, i.e., $10$ - $20$ steps. For longer horizons the predicted rollout overfits to the value function outside of the current state distribution, which prevents learning or leads to pre-mature convergence. This is very surprising as even for the true model long time-horizons can be counterproductive due to the local approximation of the value function. Therefore, DP cFVI works bests with $\lambda \in [0.85, 0.95]$ and RTDP cFVI with $\lambda \in [0.45 - 0.55]$.

\subsection*{Ablation Study - Model Architecture}
To evaluate the impact of the locally quadratic architecture described by
\begin{align*}
V(\vx; \: \psi) &= -\left(\vx -  \vx_{\text{des}}\right)^T \mL(\vx;\:\psi) \mL(\vx;\:\psi)^T \left(\vx -  \vx_{\text{des}}\right), 
\end{align*}
where $\mL$ is a lower triangular matrix with positive diagonal, we compare this architecture to a standard multi-layer perceptron with and without feature transformation. The learning curves for the ablation study highlighting the importance of the network architecture are shown in Figure \ref{fig:ablation_architecture}. The reward curves are averaged over $5$ seeds and visualize the maximum range between seeds. For most systems all network architecture are able to learn a good policy. The locally quadratic value function is on average the best performing architecture. The structure acts as a inductive bias that shapes the exploration and leads to faster learning. The global maximum is guaranteed to be  at $\vx_{\text{des}}$ as $\mL(\vx;\:\psi) \mL(\vx;\:\psi)^T$ is positive definite. Therefore, the initial policy directly performs hill-climbing towards the balancing position.  Only for the cartpole the other network architectures fail. For this system, these architectures learn a local optimal solution of balancing the pendulum downwards. This local optima the conservative solution as the cost associated with the cart position is comparatively high to avoid the cart limits on the physical system. Therefore, stabilizing the pendulum downwards is better compared to swinging the pendulum up and failing at the balancing. The locally quadratic network with the feature transform, learns the optimal policy for the cartpole. This architecture avoids the local solution as the network structure guides the exploration to be optimistic and the feature transform simplifies the value function learning to learn a successful balancing. 

\subsection*{Ablation Study - Model Ensemble}
The learning curves for different model ensemble sizes are shown in Figure \ref{fig:ablation_ensemble}. The model ensemble does not significantly affect the performance of the final policy but reduces the variance in learning speed between seeds. The variance between seeds also increases. The reduced variance for the model ensembles is caused by the smoothing of the network initialization. The mean across different initial weights lets the initialization be more conservative compared to a single network. For the comparatively small value function networks (i.e., 2-3 layers deep and 64 - 128 units wide), we prefer the network ensembles as the computation time does not increase when increasing the ensemble size. If the individual networks are batched and evaluated on the GPU the computation time does not increase. The network ensembles could also be evaluated at $500$Hz for the real-time control experiments using an Intel i7 9900k.

\end{document}